\newtheorem{theorem}{Theorem}
\DeclareMathOperator{\tr}{tr}
\DeclareMathOperator*{\argmax}{arg\,max}
\begin{document}
%
\title{Partial Label Learning with Self-Guided Retraining}
\author{Lei Feng$^{1,2}$\and Bo An$^1$\\
$^1$School of Computer Science and Engineering, Nanyang Technological University, Singapore\\
$^2$Alibaba-NTU Singapore Joint Research Institute, Singapore\\
feng0093@e.ntu.edu.sg, boan@ntu.edu.sg}
\maketitle
\begin{abstract}
Partial label learning deals with the problem where each training instance is assigned a set of candidate labels, only one of which is correct. This paper provides the first attempt to leverage the idea of self-training for dealing with partially labeled examples. Specifically, we propose a unified formulation with proper constraints to train the desired model and perform pseudo-labeling jointly. For pseudo-labeling, unlike traditional self-training that manually differentiates the ground-truth label with enough high confidence, we introduce the maximum infinity norm regularization on the modeling outputs to automatically achieve this consideratum, which results in a convex-concave optimization problem. We show that optimizing this convex-concave problem is equivalent to solving a set of quadratic programming (QP) problems. By proposing an upper-bound surrogate objective function, we turn to solving only one QP problem for improving the optimization efficiency. Extensive experiments on synthesized and real-world datasets demonstrate that the proposed approach significantly outperforms the state-of-the-art partial label learning approaches.
\end{abstract}

\section{Introduction}
In partial label (PL) learning, each training example is represented by a single instance (feature vector) while associated with a set of candidate labels, only one of which is the ground-truth label. This learning paradigm is also termed as \textsl{superset label learning}~\cite{liu2012conditional,liu2014learnability,hullermeier2015superset,gong2018regularization} or \textsl{ambiguous label learning}~\cite{hullermeier2006learning,zeng2013learning,chen2014ambiguously,chen2017learning}. Since manually labeling the ground-truth label of each instance could incur unaffordable monetary or time cost, partial label learning has various application domains, such as web mining~\cite{luo2010learning}, image annotation~\cite{cour2011learning,zeng2013learning}, and ecoinformatics~\cite{liu2012conditional}.

Formally, let $\mathcal{X}\in\mathbb{R}^n$ be the $n$-dimensional feature space and $\mathcal{Y}=\{1,2,\cdots,l\}$ be the corresponding label space with $l$ labels. Suppose the PL training set is denoted by $\mathcal{D}=\{\mathbf{x}_i,S_i\}_{i=1}^m$ where $\mathbf{x}_i\in\mathcal{X}$ is an $n$-dimensional feature vector and $S_i$ denotes the candidate label set. The key assumption of partial label learning lies in that the ground-truth label for $\mathbf{x}_i$ is concealed in its candidate label set $S_i$. The task of partial label learning is to learn a function $f:\mathcal{X}\rightarrow\mathcal{Y}$ from the PL training set $\mathcal{D}$, to correctly predict the label of a test instance.

Obviously, the available labeling information in the PL training set is ambiguous, as the ground-truth label is concealed in the candidate label set. Hence the key for accomplishing the task of learning from PL examples is how to disambiguate the candidate labels. Based on the employed strategy, existing approaches can be roughly grouped into two categories, including the average-based strategy and the identification-based strategy. The average-based strategy assumes that each candidate label makes equal contributions to the model training, and the prediction is made by averaging their modeling outputs~\cite{hullermeier2006learning,cour2011learning,zhang2016partial}. The identification-based strategy considers the ground-truth label as a latent variable, which is identified by an iterative refining procedure~\cite{jin2003learning,nguyen2008classification,liu2012conditional,yu2016maximum}. Although these approaches are able to extract the relative labeling confidence of each candidate label, they fail to reflect the mutually exclusive relationships among different candidate labels.

Motivated by self-training that takes into account such mutually exclusive relationships by directly labeling an unlabeled instance with enough high confidence, this paper gives the first attempt to leverage the similar idea to deal with PL instances. A straightforward method is to first apply a multi-output model on the PL examples, then pick up the candidate label with enough high confidence as the ground-truth label, finally retrain the model on the resulting data. This process is repeated until no PL examples exist, or no PL examples can be picked up as the ground-truth label. Although this method is intuitive, the model learned from PL examples are probably hard to directly identify the ground-truth label in accordance with the modeling outputs, as candidate label sets exist. Furthermore, the incorrectly identified labels could have contagiously negative impacts on the final predictions.

To address this problem, we propose a novel partial label learning approach named SURE (Self-gUided REtraining). Specifically, we propose a novel unified formulation with proper constraints to train the desired model and perform pseudo-labeling jointly. Unlike traditional self-training that manually differentiates the ground-truth label with enough high confidence, we introduce the maximum infinity norm regularization on the modeling outputs to automatically perform pseudo-labeling. In this way, the pseudo labels are decided by balancing the minimum approximation loss and the maximum infinity norm. To optimize the objective function, a convex-concave problem is encountered, as a result of the maximum infinity norm regularization. We show that solving this convex-concave problem is equivalent to solving a set of quadratic programming problems. By proposing an upper-bound surrogate objective function, we turn to solving only one quadratic programming problem for improving the optimization efficiency. Extensive experiments on a number of synthesized and real-world datasets clearly demonstrate the advantage of the proposed approach.
\section{Related Work}
Due to the difficulty in dealing with ambiguous labeling information of PL examples, there are only two general strategies to disambiguate the candidate labels, including the average-based strategy and the identification-based strategy.

The average-based strategy treats each candidate label equally in the model training phase. Following this strategy, some instance-based approaches predict the label $y$ of the test instance $\mathbf{x}$ by averaging the candidate labels of its neighbors~\cite{hullermeier2006learning,zhang2015solving}, i.e., $\argmax_{y\in\mathcal{Y}}\sum_{\mathbf{x}_i\in{\mathcal{N}_{(\mathbf{x})}}}\mathbb{I}(y_i\in S_i)$ where $\mathcal{N}(\mathbf{x})$ denotes the neighbors of instance $\mathbf{x}$. Besides, some parametric approaches adopt a parametric model $F(\mathbf{x}_i,y;\theta)$~\cite{cour2011learning,zhang2016partial} that differentiates the average modeling output of the candidate labels, i.e., $\frac{1}{|S_i|}\sum_{y\in S_i}F(\mathbf{x}_i,y;\theta)$ from that of the non-candidate labels, i.e., $F(\mathbf{x}_i,\hat{y};\theta)$ $(\hat{y}\in\hat{S}_i)$,
where $\hat{S}_i$ denotes the non-candidate label set. Although this strategy is intuitive, an obvious drawback is that the modeling output of the ground-truth label may be overwhelmed by the distractive outputs of the false positive labels.

The identification-based strategy considers the ground-truth label as a latent variable, and assumes certain parametric model $F(\mathbf{x},y;\theta)$ where the ground-truth label is identified by $\argmax_{y\in S_i}F(\mathbf{x}_i,y;\theta)$. Generally, the specific objective function is optimized on the basis of the maximum likelihood criterion: $\max(\sum_{i=1}^m\log(\sum_{y\in S_i}\frac{1}{|S_i|}F(\mathbf{x}_i,y;\theta)))$~\cite{jin2003learning,liu2012conditional} or the maximum margin criterion: $\max(\sum_{i=1}^m(\max_{y\in S_i}F(\mathbf{x}_i,y;\theta)-\max_{\hat{y}\in\hat{S}_i}F(\mathbf{x}_i,\hat{y};\theta)))$~\cite{nguyen2008classification,yu2016maximum}. One potential drawback of this strategy lies in that instead of recovering the ground-truth label, the differentiated label may turn out to be false positive, which could severely disrupt the subsequent model training. 

Self-training is a commonly used technique for semi-supervised learning~\cite{zhu2009introduction}, which is characterized by the fact that the learning process uses its own predictions to teach itself. It has the advantage of taking into account the mutually exclusive relationships among labels by directly labeling an unlabeled instance with enough high confidence. Despite of its simplicity, the early mistakes could be exaggerated by further generating incorrectly labeled data. It is going to be even worse in the PL setting, as the ground-truth label is concealed in the candidate label set.

By alleviating the negative effects of self-training with a unified formulation, a novel partial label learning approach following the identification-based strategy will be introduced in the next section.
\section{The Proposed Approach}
Following the notations in Introduction, we denote by $\mathbf{X} = [\mathbf{x}_1,\cdots,\mathbf{x}_m]^\top\in\mathbb{R}^{m\times n}$ the instance matrix and $\mathbf{Y}=[\mathbf{y}_1,\cdots,\mathbf{y}_m]^\top\in\{0,1\}^{m\times l}$ the corresponding label matrix, where $y_{ij} = 1$ means that the $j$-th label is a candidate label of the $i$-th instance $\mathbf{x}_i$, otherwise the $j$-th label is a non-candidate label. By adopting the identification-based strategy, we also regard the ground-truth label as latent variable, and denote by $\mathbf{P}=[\mathbf{p}_1,\cdots,\mathbf{p}_m]^\top\in[0,1]^{m\times l}$ the confidence matrix where $p_{ij}$ represents the confidence (probability) of the $j$-th label being the ground-truth label of the $i$-th instance.

Unlike self-training that takes into account the mutually exclusive relationships among the candidate labels by performing deterministic pseudo-labeling, we introduce the maximum infinity norm regularization to automatically achieve this consideratum. A unified formulation with proper constraints is proposed as follows:
\begin{gather}
\nonumber
\min \sum_{i=1}^m (L(\mathbf{x}_i,\mathbf{p}_i,f) - \lambda\left\|\mathbf{p}_i\right\|_{\infty})+\beta\Omega(f)\\
\label{eq1}
\text{s.t.}\quad 0 \leq p_{ij} \leq y_{ij},\ \forall i\in[m],\ \forall j\in[l]\\
\nonumber
\sum\nolimits_{j=1}^lp_{ij}=1, \quad \forall i\in [m]
\end{gather}
where $[m]:=\{1,2,\cdots,m\}$, $L$ is the employed loss function, $\Omega$ controls the complexity of model $f$, and $\lambda,\beta$ are the tradeoff parameters. In this unified formulation, the model is learned from the pseudo label matrix $\mathbf{P}$, rather than the original noisy label matrix $\mathbf{Y}$. Besides, unlike the way of traditional self-training to perform deterministic pseudo-labeling by picking up the label with enough high confidence, the confidence of the ground-truth label is differentiated and enlarged by trading off the loss and the maximum infinity norm. Intuitively, only within the allowable range of loss, the candidate label with enough high confidence can be identified as the ground-truth label. In this way, the negative effects of self-training is alleviated by training the model and performing pseudo-labeling jointly. In addition, the first constraint plays two roles: the confidence of each candidate label should be larger than 0, but no more than 1; the confidence of each non-candidate label should be strictly 0. The second constraint guarantees that each confidence vector $\mathbf{p}_i$ will always be in the probability simplex, i.e., $\{\mathbf{p}_i\in[0,1]^l:\sum_{j}p_{ij}=1|i\in[m]\}$. This constraint also implicitly takes into consideration the mutually exclusive relationships among the candidate labels, as the confidence of certain one candidate label is enlarged by the maximum infinity norm regularization, the confidences of other candidate labels will be naturally reduced.

To instantiate the above formulation, we adopt the widely used squared loss, i.e., $L(\mathbf{x}_i,\mathbf{p}_i,f) = \left\|f(\mathbf{x}_i)-\mathbf{p}_i\right\|_2^2$. Besides, we employ the simple linear model $f(\mathbf{x}_i) = \mathbf{W}^\top\mathbf{x}_i+\mathbf{b}$ where $\mathbf{W},\mathbf{b}$
are model parameters. A kernel extension for the general nonlinear case will be introduced in the later section. To control the model parameter, we simply adopt the common squared Frobenius norm of $\mathbf{W}$, i.e., $\left\|\mathbf{W}\right\|_F^2$. To sum up, the final optimization problem is presented as follows:
\begin{gather}
\nonumber
\min\limits_{\mathbf{P},\mathbf{W},\mathbf{b}} \sum_{i=1}^m (\left\|\mathbf{W}^\top\mathbf{x}_i+\mathbf{b}-\mathbf{p}_i\right\|_2^2 - \lambda\left\|\mathbf{p}_i\right\|_{\infty})+\beta\left\|\mathbf{W}\right\|_F^2\\
\label{obfun}
\text{s.t.}\quad 0 \leq p_{ij} \leq y_{ij},\ \forall i\in[m],\ \forall j\in[l]\\
\nonumber
\sum\nolimits_{j=1}^lp_{ij}=1, \quad \forall i\in [m]
\end{gather}
\section{Optimization}
Problem~(\ref{obfun}) can be solved by alternating minimization, which enable us to optimize one variable with other variables fixed. 
This process is repeated until convergence or the maximum number of iterations is reached.
\subsection{Updating $\mathbf{W}$ and $\mathbf{b}$}
With $\mathbf{P}$ fixed, problem (\ref{obfun}) with respective to $\mathbf{W}$ and $\mathbf{b}$ can be compactly stated as follows:
\begin{gather}
\label{eq3}
\min\limits_{\mathbf{W}, \mathbf{b}}\left\|\mathbf{X}\mathbf{W}+\mathbf{1}\mathbf{b}^\top-\mathbf{P}\right\|_F^2+\beta\left\|\mathbf{W}\right\|_F^2
\end{gather}
where $\mathbf{1}$ denotes the vector with all components set to 1. Setting the gradient with respect to $\mathbf{W}$ and $\mathbf{b}$ to 0, the closed-form solutions can be easily obtained:
\begin{align}
\nonumber
\mathbf{W} &= (\mathbf{X}^\top\mathbf{X}+\beta\mathbf{I}-\frac{\mathbf{X}^\top\mathbf{1}\mathbf{1}^\top\mathbf{X}}{m})^{-1}(\mathbf{X}^\top\mathbf{P}-\frac{\mathbf{X}^\top
\mathbf{1}\mathbf{1}^\top\mathbf{P}}{m})\\
\label{eq4}
\mathbf{b} &= \frac{1}{m}(\mathbf{P}^\top\mathbf{1} - \mathbf{W}^\top\mathbf{X}^\top\mathbf{1})
\end{align}
\subsubsection{Kernel Extension}
To deal with the nonlinear case, the above linear learning model can be easily extended to a kernel-based nonlinear model. To achieve this, we utilize a feature mapping $\phi(\cdot):\mathbb{R}^n\rightarrow\mathbb{R}^{\mathcal{H}}$ to map the original feature space $\mathbf{x}\in\mathbb{R}^n$ to some higher (maybe infinite) dimensional Hilbert space $\phi(\mathbf{x})\in\mathbb{R}^{\mathcal{H}}$. By representor theorem~\cite{scholkopf2002learning}, $\mathbf{W}$ can be represented by a linear combination of input variables, i.e., $\mathbf{W} = \phi(\mathbf{X})^\top\mathbf{A}$ where $\mathbf{A}\in\mathbb{R}^{m\times l}$ stores the combination weights of instances. Hence $\phi(\mathbf{X})\mathbf{W} = \mathbf{K}\mathbf{A}$ where $\mathbf{K}\in\phi(\mathbf{X})\phi(\mathbf{X})^\top\in\mathbb{R}^{m\times m}$ is the kernel matrix with each element defined by $k_{ij} = \phi(\mathbf{x}_i)^\top\phi(\mathbf{x}_j)=\kappa(\mathbf{x}_i,\mathbf{x}_j)$, and $\kappa(\cdot,\cdot)$ denotes the kernel function. In this paper, Gaussian kernel function $\kappa(\mathbf{x}_i,\mathbf{x}_j) = \exp(-\left\|\mathbf{x}_i-\mathbf{x}_j\right\|_2^2/(2\sigma^2))$ is employed with $\sigma$ set to the averaged pairwise distances of instances. By incorporating such kernel extension, problem (\ref{eq3}) can be presented as follows:
\begin{gather}
\label{eq5}
\min\limits_{\mathbf{A}, \mathbf{b}}\left\|\mathbf{K}\mathbf{A}+\mathbf{1}\mathbf{b}^\top-\mathbf{P}\right\|_F^2+\beta\tr(\mathbf{A}^\top\mathbf{K}\mathbf{A})
\end{gather}
where $\tr(\cdot)$ is the trace operator. Setting the gradient with respect to $\mathbf{A}$ and $\mathbf{b}$ to 0, the closed-form solutions are reported as:
\begin{align}
\nonumber
\mathbf{A} &= (\mathbf{K}+\beta\mathbf{I}-\frac{\mathbf{1}\mathbf{1}^\top\mathbf{K}}{m})^{-1}(\mathbf{P}-
\frac{\mathbf{1}\mathbf{1}^\top\mathbf{P}}{m})\\
\label{eq6}
\mathbf{b} &= \frac{1}{m}(\mathbf{P}^\top\mathbf{1} - \mathbf{A}^\top\mathbf{K}^\top\mathbf{1})
\end{align}
By adopting this kernel extension, we choose to update the parameters $\mathbf{A}$ and $\mathbf{b}$ throughout this paper.
\subsection{Updating $\mathbf{P}$}
With $\mathbf{A}$ and $\mathbf{b}$ fixed, the modeling output matrix $\mathbf{Q}=[\mathbf{q}_1,\cdots,\mathbf{q}_m]^\top\in\mathbb{R}^{m\times l}$ is denoted by $\mathbf{Q}=\phi(\mathbf{X})\mathbf{W}+\mathbf{1}\mathbf{b}^\top=\mathbf{K}\mathbf{A}+\mathbf{1}\mathbf{b}^\top$, problem (\ref{obfun}) reduces to:
\begin{gather}
\nonumber
\min\limits_{\mathbf{P}} \sum_{i=1}^m (\left\|\mathbf{p}_i-\mathbf{q}_i\right\|_2^2 - \lambda\left\|\mathbf{p}_i\right\|_{\infty})\\
\label{eq7}
\text{s.t.}\quad 0 \leq p_{ij} \leq y_{ij},\ \forall i\in[m],\ \forall j\in[l]\\
\nonumber
\sum\nolimits_{j=1}^lp_{ij}=1, \quad \forall i\in [m]
\end{gather}
Obviously, we can solve problem (\ref{eq7}) by solving $m$ independent problems, one for each example. We further denote by $\textsl{OP}$ the minimum loss of the problem for the $i$-th example:
\begin{gather}
\nonumber
\textsl{OP} = \min\limits_{\mathbf{p}_i}\left\|\mathbf{p}_i - \mathbf{q}_i\right\|_2^2 - \lambda\left\|\mathbf{p}_i\right\|_{\infty}\\
\label{eq8}
\text{s.t.} \quad \mathbf{1}^\top\mathbf{p}_i=1\\
\nonumber
\quad \mathbf{0} \leq \mathbf{p}_i \leq \mathbf{y}_i
\end{gather}
Here, problem (\ref{eq8}) is a constrained convex-concave problem, as the first term is convex while the second term is concave. Instead of using traditional time-consuming convex-concave procedure~\cite{yuille2003concave} to solve this problem, we show that optimizing this problem is equivalent to solving $l$ independent QP problems, each for one label. We denoted by $\textsl{OPI}(j)$ the minimum loss of the problem for the $j$-th label:
\begin{gather}
\nonumber
\textsl{OPI}(j) = \min\limits_{\mathbf{p}_i}\left\|\mathbf{p}_i - \mathbf{q}_i\right\|_2^2 - \lambda p_{ij}\\
\label{eq9}
\text{s.t.} \quad p_{ik}\leq p_{ij},\ \forall k\in[l]\\
\nonumber
\quad \mathbf{1}^\top\mathbf{p}_i=1\\
\nonumber
\quad \mathbf{0} \leq \mathbf{p}_i \leq \mathbf{y}_i
\end{gather}
\begin{theorem}
\label{th1}
$\textsl{OP} = \min\nolimits_{j\in[l]}\textsl{OPI}(j)$.
\end{theorem}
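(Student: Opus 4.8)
The plan is to exploit the fact that, on the feasible set, the infinity norm is simply the largest coordinate, and to split the minimization according to which coordinate attains that maximum. Since the constraints force $\mathbf{p}_i \geq \mathbf{0}$, we have $\|\mathbf{p}_i\|_\infty = \max_{j\in[l]} p_{ij}$. Let $\mathcal{F}$ denote the feasible set of problem~(\ref{eq8}), i.e. the points satisfying $\mathbf{1}^\top\mathbf{p}_i = 1$ and $\mathbf{0}\leq\mathbf{p}_i\leq\mathbf{y}_i$, and write $g(\mathbf{p}_i) = \|\mathbf{p}_i-\mathbf{q}_i\|_2^2 - \lambda\|\mathbf{p}_i\|_\infty$ for the $\textsl{OP}$ objective.

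First I would define, for each $j\in[l]$, the region $\mathcal{F}_j = \{\mathbf{p}_i\in\mathcal{F} : p_{ik}\leq p_{ij}\ \forall k\in[l]\}$, that is, the feasible points whose $j$-th coordinate is maximal. These are exactly the feasible sets of the subproblems $\textsl{OPI}(j)$ in~(\ref{eq9}). The key observation is that on $\mathcal{F}_j$ the infinity norm is pinned to the $j$-th coordinate, $\|\mathbf{p}_i\|_\infty = p_{ij}$, so the concave term becomes linear and $g(\mathbf{p}_i) = \|\mathbf{p}_i-\mathbf{q}_i\|_2^2 - \lambda p_{ij}$ coincides with the objective of $\textsl{OPI}(j)$. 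Hence $\textsl{OPI}(j) = \min_{\mathbf{p}_i\in\mathcal{F}_j} g(\mathbf{p}_i)$.

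Next I would argue that the regions cover the whole feasible set, $\mathcal{F} = \bigcup_{j=1}^l \mathcal{F}_j$: any feasible $\mathbf{p}_i$ has at least one coordinate attaining its maximum, and placing that index in the role of $j$ exhibits membership in the corresponding $\mathcal{F}_j$. Because taking a minimum over a finite union equals the minimum of the minima over the pieces, it follows that
\[
\textsl{OP} = \min_{\mathbf{p}_i\in\mathcal{F}} g(\mathbf{p}_i) = \min_{j\in[l]} \min_{\mathbf{p}_i\in\mathcal{F}_j} g(\mathbf{p}_i) = \min_{j\in[l]} \textsl{OPI}(j),
\]
which is the claimed identity.

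The step I expect to require the most care is checking that the cover is genuinely complete and that degenerate labels do no harm. For a non-candidate label ($y_{ij}=0$) the constraint $\mathbf{0}\leq p_{ij}\leq y_{ij}$ forces $p_{ij}=0$, and then $p_{ik}\leq p_{ij}=0$ together with $\mathbf{1}^\top\mathbf{p}_i=1$ is contradictory, so $\mathcal{F}_j$ is empty and $\textsl{OPI}(j)=+\infty$; such indices simply drop out of the outer minimum and match the fact that no feasible point (whose coordinates sum to $1$ and are nonnegative) can have a zero coordinate as its maximum. Ties, where several coordinates share the maximum, only cause the regions $\mathcal{F}_j$ to overlap on their boundaries, which is harmless for a union argument. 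I would close by noting that each $\textsl{OPI}(j)$ has a convex quadratic objective with linear constraints, so the reduction replaces the convex-concave problem~(\ref{eq8}) by $l$ genuine QPs, as promised.
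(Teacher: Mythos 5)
Your proof is correct and follows essentially the same route as the paper's: both decompose the feasible set according to which coordinate attains the infinity norm, observe that on each piece the concave term $-\lambda\left\|\mathbf{p}_i\right\|_{\infty}$ linearizes to $-\lambda p_{ij}$ so the restricted problem is exactly $\textsl{OPI}(j)$, and conclude by taking the minimum over the pieces. Your write-up is in fact the more careful one --- the paper asserts the strict inequality $\textsl{OPI}(j)>\textsl{OP}$ when $p_{ij}\neq\left\|\mathbf{p}_i\right\|_{\infty}$, which fails under ties and should be $\geq$, whereas your union argument handles ties and empty regions (non-candidate labels) explicitly.
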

\begin{proof}
It is obvious that there must exist $j\in[l]$ such that $p_{ij} = \left\|\mathbf{p}_i\right\|_{\infty}$ and the optimum loss \textsl{OP} of problem (\ref{eq8}) can be obtained. In addition, if $p_{ij} = \left\|\mathbf{p}_i\right\|_{\infty}$ coincidentally holds, then $\textsl{OPI}(j) = \textsl{OP}$, as in such case, problem (\ref{eq9}) is equivalent to problem (\ref{eq8}). While if $p_{ij}\neq\left\|\mathbf{p}_i\right\|_{\infty}$, then $\textsl{OPI}(j)>\textsl{OP}$. Hence $\textsl{OP} = \min\nolimits_{j\in[l]}\textsl{OPI}(j)$.
\end{proof}
Theorem \ref{th1} gives us a motivation to solve problem (\ref{eq8}) by selecting the minimum loss from $l$ independent quadratic programming problems. However, this may be time-consuming, as the label space could be very large. Thus, we propose a surrogate objective function to upper bound the loss incurred by problem (\ref{eq8}). Specifically, we select the candidate label $j$ with the maximal modeling output by $j=\argmax_{j\in S_i}q_{ij}$ where $S_i$ is the candidate label set containing the indices of candidate labels of the instance $\mathbf{x}_i$. The proposed surrogate objective function is given as:
\begin{gather}
\nonumber
\textsl{OPS} = \min\limits_{\mathbf{p}_i}\left\|\mathbf{p}_i - \mathbf{q}_i\right\|_2^2 - \lambda p_{ij}\\
\label{eq10}
\text{s.t.} \quad q_{ik}\leq q_{ij},\ \forall k\in S_i,\ \exists j\in S_i\\
\nonumber
\quad p_{ik}\leq p_{ij},\ \forall k\in[l]\\
\nonumber
\quad \mathbf{1}^\top\mathbf{p}_i=1\\
\nonumber
\quad \mathbf{0} \leq \mathbf{p}_i \leq \mathbf{y}_i
\end{gather}
Note that the difference between problem (\ref{eq10}) and problem (\ref{eq9}) lies in that problem (\ref{eq10}) adds a constraint to select the label $j$ with the maximal modeling output. Unlike problem (\ref{eq9}) that considers the possibility of each label being the ground-truth label, problem (\ref{eq10}) directly assumes the candidate label $j$ with the maximal modeling output $j=\argmax_{j\in S_i}q_{ij}$ to be the label that is most likely the ground-truth label. This assumption coincides with self-training, which also considers the label with the maximal modeling output as the ground-truth label. Different from self-training that manually performs deterministic pseudo-labeling, our approach aims to automatically enlarge the confidence of the label with the maximal modeling output as much as possible by balancing the two terms in problem (\ref{eq10}). In this way, we not only avoid the opinionated mistakes by self-training, but also take into account the mutually exclusive relationships among candidate labels.
\begin{theorem}
\label{th2}
$\textsl{OP}\leq\textsl{OPS}$.
\end{theorem}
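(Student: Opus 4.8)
The plan is to recognize that $\textsl{OPS}$ is nothing but a single instance of the family $\{\textsl{OPI}(j)\}_{j\in[l]}$ introduced in Theorem~\ref{th1}, and then to invoke that theorem directly. First I would parse the constraints of problem~(\ref{eq10}) carefully. Since the modeling output $\mathbf{q}_i$ is held fixed while updating $\mathbf{P}$, the leading constraint $q_{ik}\leq q_{ij},\ \forall k\in S_i,\ \exists j\in S_i$ places no restriction on the decision variable $\mathbf{p}_i$ at all; it merely serves to pin down the index to $j^\star=\argmax_{j\in S_i}q_{ij}$. Consequently, once $j^\star$ is selected, the remaining constraints of~(\ref{eq10})---namely $p_{ik}\leq p_{ij^\star}$ for all $k\in[l]$, together with $\mathbf{1}^\top\mathbf{p}_i=1$ and $\mathbf{0}\leq\mathbf{p}_i\leq\mathbf{y}_i$---and the objective $\left\|\mathbf{p}_i-\mathbf{q}_i\right\|_2^2-\lambda p_{ij^\star}$ coincide exactly with those of problem~(\ref{eq9}) taken at $j=j^\star$. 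I would therefore conclude that $\textsl{OPS}=\textsl{OPI}(j^\star)$.

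Next I would observe that $j^\star\in S_i\subseteq[l]$, so $j^\star$ is one of the indices ranged over by the minimization in Theorem~\ref{th1}. Combining this with Theorem~\ref{th1} yields
\[
\textsl{OP}=\min_{j\in[l]}\textsl{OPI}(j)\leq\textsl{OPI}(j^\star)=\textsl{OPS},
\]
which is precisely the claimed inequality. The core logic is simply that replacing a minimum over all labels by the value attained at one fixed label can only increase (or leave unchanged) the optimum.

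The only point that requires genuine care---and the closest thing to an obstacle here---is the correct reading of the mixed-quantifier constraint $q_{ik}\leq q_{ij},\ \forall k\in S_i,\ \exists j\in S_i$: it must be interpreted as a rule that \emph{selects} a maximizing index $j^\star$ of the fixed vector $\mathbf{q}_i$ over the candidate set, rather than as an additional feasibility restriction imposed on $\mathbf{p}_i$. Once this is settled, Theorem~\ref{th2} follows as an immediate corollary of Theorem~\ref{th1}, with no further computation needed.
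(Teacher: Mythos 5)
Your proof is correct and follows essentially the same route as the paper's: you identify $\textsl{OPS}$ as $\textsl{OPI}(j^\star)$ for the particular index $j^\star=\argmax_{j\in S_i}q_{ij}\in S_i\subseteq[l]$, and then invoke Theorem~\ref{th1} to bound it below by $\textsl{OP}=\min_{j\in[l]}\textsl{OPI}(j)$. Your explicit reading of the mixed-quantifier constraint as an index-selection rule rather than a restriction on $\mathbf{p}_i$ is a slightly more careful articulation of what the paper states as ``it is easy to see that $\textsl{OPS}\in\{\textsl{OPI}(j)\,|\,j\in S_i\}$,'' but the argument is the same.
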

\begin{proof}
From the formulation of problem (\ref{eq10}) and problem (\ref{eq9}), it is easy to see that $\textsl{OPS}\in\{\textsl{OPI}(j)|j\in S_i\}$. Since $S_i\subset[l]$, $\textsl{OPS}\in\{\textsl{OPI}(j)|j\in[l]\}$. Which means, $\min_{j\in[l]}\textsl{OPI}(j)\leq\textsl{OPS}$. Using Theorem (\ref{th1}), $\textsl{OP}=\min_{j\in[l]}\textsl{OPI}(j)\leq\textsl{OPS}$.
\end{proof}
Theorem (\ref{th2}) shows that $\textsl{OPS}$ of problem (\ref{eq10}) is an upper bound of the loss $\textsl{OP}$ incurred by problem (\ref{eq8}). Hence we can choose to optimize problem (\ref{eq10}) for efficiency, as only one QP problem is involved. Such problem can be easily solved by any off-the-shelf QP tools.

After the completion of the optimization process, the predicted label $\widetilde{y}$ of the text instance $\widetilde{\mathbf{x}}$ is given as:
\begin{align}
\label{eq11}
\widetilde{y} = \argmax\limits_{j\in [l]} \sum_{i=1}^ma_{ij}\kappa(\widetilde{\mathbf{x}}, \mathbf{x}_i) + b_j
\end{align}
The pseudo code of SURE is presented in Algorithm 1.
\begin{algorithm}[t]
\caption{The SURE Algorithm}
\label{alg1}
\begin{algorithmic}[1]
\REQUIRE\mbox{}\par
$\mathcal{D}$: the PL training set $\mathcal{D}=\{(\mathbf{X}, \mathbf{Y})\}$\\
$\lambda, \beta$: the regularization parameters\\
$\widetilde{\mathbf{x}}$: the unseen test instance\\
\ENSURE \mbox{}\par
$\widetilde{y}$: the predicted label for the test instance $\widetilde{\mathbf{x}}$\\
\item[]
\STATE construct the kernel matrix $\mathbf{K}=[\kappa(\mathbf{x}_i, \mathbf{x}_j)]_{m\times m}$;
\STATE initialize $\mathbf{P}=\mathbf{Y}$;
\REPEAT
\STATE update $\mathbf{A}=[a_{ij}]_{m\times l}$ and $\mathbf{b}=[b_j]_l$ according to (\ref{eq6});
\STATE update $\mathbf{Q} = \mathbf{KA} + \mathbf{1}\mathbf{b}^\top$;
\STATE calculate $\mathbf{P}$ by solving (\ref{eq10}) with a general QP procedure for each training example;
\UNTIL convergence or the maximum number of iterations.\STATE return the predicted label $\widetilde{y}$ according to (\ref{eq11}).
\end{algorithmic}
\end{algorithm}
\section{Experiments}
\subsection{Comparing Algorithms}
To demonstrate the effectiveness of SURE, we conduct extensive experiments to compare SURE with six state-of-the-art partial label learning algorithms, each configured with suggested parameters according to the respective literature:
\begin{itemize}
\item PLKNN~\cite{hullermeier2006learning}: a $k$-nearest neighbor approach that makes predictions by averaging the labeling information of neighboring examples [suggested configuration: $k\in\{5,6,\cdots,10\}$];
\item CLPL~\cite{cour2011learning}: a convex formulation that deals with PL examples by transforming partial label learning problem to binary learning problem via feature mapping [suggested configuration: SVM with squared hinge loss];
\item IPAL~\cite{zhang2015solving}: an instance-based approach that disambiguates candidate labels by an adapted label propagation scheme. [suggested configuration: $\alpha\in\{0,0.1,\cdots,1\}$, $k\in\{5,6,\cdots,10\}$];
\item PLSVM~\cite{nguyen2008classification}: a maximum margin approach that learns from PL examples by optimizing margin-based objective function [suggested configuration: $\lambda\in\{10^{-3},10^{-2},\cdots,10^3\}$];
\item PALOC~\cite{wu2018towards}: an approach that adapts one-vs-one decomposition strategy to enable binary decomposition for learning from PL examples [suggested configuration: $\mu=10$];
\item LSBCMM~\cite{liu2012conditional}: a maximum likelihood approach that learns from PL examples via mixture models [suggested configuration: $L = \lceil 10\log_2(l) \rceil$].
\end{itemize}
The two parameters $\lambda$ and $\beta$ for SURE 
 are chosen from $\{0.001, 0.01, 0.05, 0.1, 0.3, 0.5, 1\}$. Parameters for each algorithm are selected by five-fold cross-validation on the training set. For each dataset, ten-fold cross-validation is performed where mean prediction accuracies and the standard deviations are recorded. In addition, we use $t$-test at 0.05 significance level for two independent samples to investigate whether SURE is significantly superior/inferior (win/loss) to the comparing algorithms for all the experiments.
\subsection{Controlled UCI Datasets}
The characteristics of 4 controlled UCI datasets are reported in Table 1. Following the widely-used controlling protocol~\cite{cour2011learning,liu2012conditional,zhang2015solving,wu2018towards,feng2018leveraging,wang2018towards}, each UCI dataset can be used to generate artificial partial label datasets. There are three controlling parameters $p$, $r$ and $\epsilon$ where $p$ controls the proportion of PL examples, $r$ controls the number of false positive labels, and $\epsilon$ controls the probability of a specific false positive label occurring with the ground-truth label. As shown in Table 1, there are 4 configurations, each corresponding to 7 results. Hence we can totally generate $4\times 4\times 7=112$ different artificial partial label datasets.

Figure 1 shows the classification accuracy of each algorithm as $\epsilon$ ranges from 0.1 to 0.7 when $p=0.1$ and $r=1$ (Configuration (\uppercase\expandafter{\romannumeral1})). In this setting, a specific label is selected as the coupled label that co-occurs with the ground-truth label with probability $\epsilon$, and any other label would be randomly chosen to be a false positive label with probability $1-\epsilon$. Figures 2, 3, and 4 illustrate the classification accuracy of each algorithm as $p$ ranges from 0.1 to 0.7 when $r=1,2$, and 3 (Configuration (\uppercase\expandafter{\romannumeral2}), (\uppercase\expandafter{\romannumeral3}), and (\uppercase\expandafter{\romannumeral4})), respectively. In these three settings, $r$ extra labels are randomly chosen to be the false positive labels. That is, the number of candidate labels for each instance is $r+1$.
\begin{table}[!t]
\centering\caption*{Table 1: Characteristics of the controlled UCI datasets.}
\setlength{\tabcolsep}{3mm}{
\begin{tabular}{c|c|c|c|c}
\hline
\hline
Dataset & deter & ecoli & glass & usps \\
\hline
Examples  & 358 & 336 & 214 & 9298 \\
\hline
Features & 23 & 7 & 9 & 256\\
\hline
Labels & 6 & 8 & 6 & 10\\
\hline
\hline
\end{tabular}
Configurations:\\
(\uppercase\expandafter{\romannumeral1}) $p=0.1,r=1$, $\epsilon\in\{0.1,0.2,\cdots,0.7\}$\\
(\uppercase\expandafter{\romannumeral2}) $r=1$, $p\in\{0.1,0.2,\cdots,0.7\}$\\
(\uppercase\expandafter{\romannumeral3}) $r=2$, $p\in\{0.1,0.2,\cdots,0.7\}$\\
(\uppercase\expandafter{\romannumeral4}) $r=3$, $p\in\{0.1,0.2,\cdots,0.7\}$
}
\label{tab1}
\end{table}
\begin{figure*}[!t]
\subfigure[ecoli]{
    \begin{minipage}[b]{0.24\textwidth}
      \centering
      \includegraphics[width=1.7in]{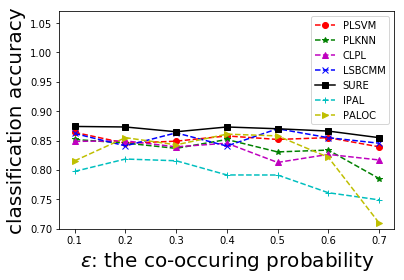}
    \end{minipage}}%
  \subfigure[deter]{
    \begin{minipage}[b]{0.24\textwidth}
      \centering
      \includegraphics[width=1.7in]{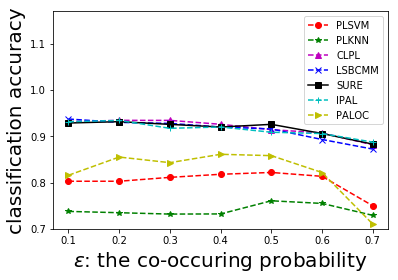}
    \end{minipage}}
      \subfigure[glass]{
    \begin{minipage}[b]{0.24\textwidth}
      \centering
      \includegraphics[width=1.7in]{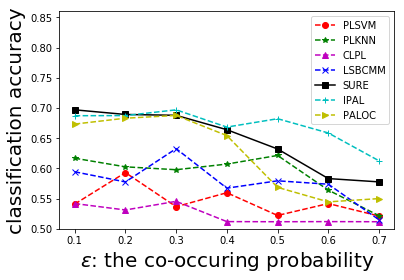}
    \end{minipage}}
      \subfigure[usps]{
    \begin{minipage}[b]{0.24\textwidth}
      \centering
      \includegraphics[width=1.7in]{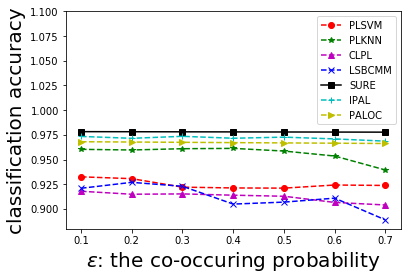}
    \end{minipage}}
  \caption*{Figure 1: Classification performance on controlled UCI datasets with $\epsilon$ ranging from 0.1 to 0.7 ($p=1,r=1$).}
  \label{fig1} 
\end{figure*}
\begin{figure*}[!t]
    \addtocounter{subfigure}{-4}\subfigure[ecoli]{
    \begin{minipage}[b]{0.24\textwidth}
      \centering
      \includegraphics[width=1.7in]{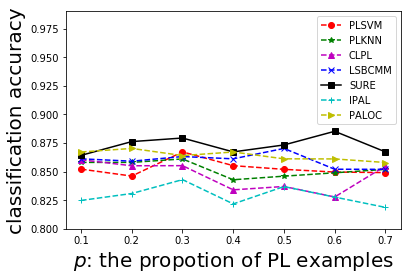}
    \end{minipage}}%
   \subfigure[deter]{
    \begin{minipage}[b]{0.24\textwidth}
      \centering
      \includegraphics[width=1.7in]{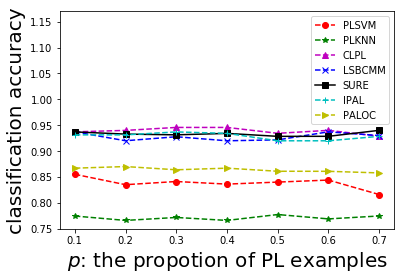}
    \end{minipage}}
   \subfigure[glass]{
    \begin{minipage}[b]{0.24\textwidth}
      \centering
      \includegraphics[width=1.7in]{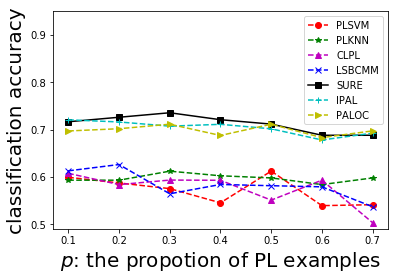}
    \end{minipage}}
    \subfigure[usps]{
    \begin{minipage}[b]{0.24\textwidth}
      \centering
      \includegraphics[width=1.7in]{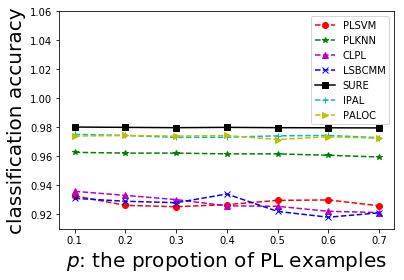}
    \end{minipage}}
  \caption*{Figure 2: Classification performance on controlled UCI datasets with $p$ ranging from 0.1 to 0.7 ($r=1$).}
  \label{fig2} 
\end{figure*}
\begin{figure*}[!t]
  \addtocounter{subfigure}{-4}\subfigure[ecoli]{
    \begin{minipage}[b]{0.24\textwidth}
      \centering
      \includegraphics[width=1.7in]{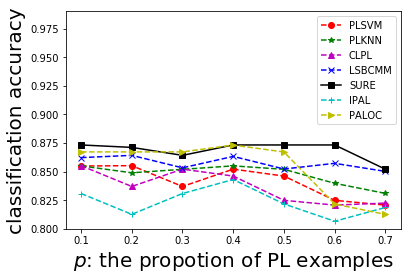}
    \end{minipage}}%
  \subfigure[deter]{
    \begin{minipage}[b]{0.24\textwidth}
      \centering
      \includegraphics[width=1.7in]{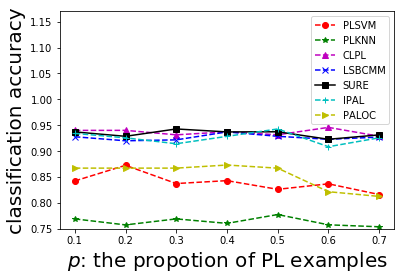}
    \end{minipage}}
      \subfigure[glass]{
    \begin{minipage}[b]{0.24\textwidth}
      \centering
      \includegraphics[width=1.7in]{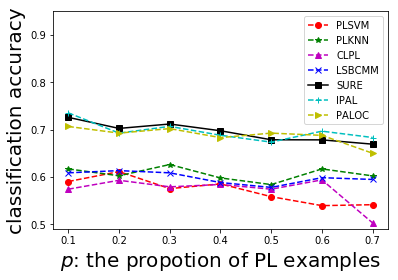}
    \end{minipage}}
      \subfigure[usps]{
    \begin{minipage}[b]{0.24\textwidth}
      \centering
      \includegraphics[width=1.7in]{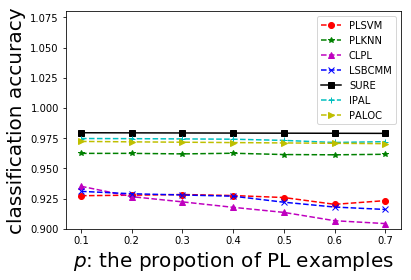}
    \end{minipage}}
  \caption*{Figure 3: Classification performance on controlled UCI datasets with $p$ ranging from 0.1 to 0.7 ($r=2$).}
  \label{fig3} 
\end{figure*}
\begin{figure*}[!t]
  \addtocounter{subfigure}{-4}\subfigure[ecoli]{
    \begin{minipage}[b]{0.24\textwidth}
      \centering
      \includegraphics[width=1.7in]{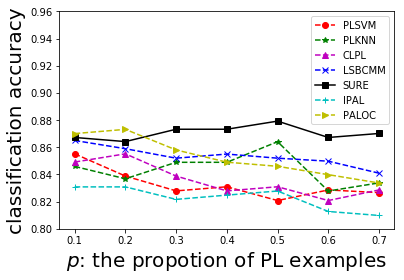}
    \end{minipage}}%
  \subfigure[deter]{
    \begin{minipage}[b]{0.24\textwidth}
      \centering
      \includegraphics[width=1.7in]{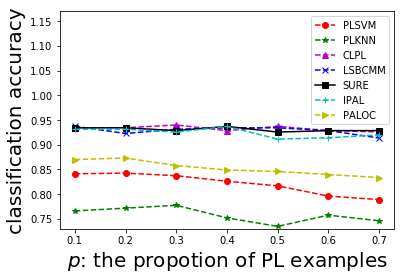}
    \end{minipage}}
      \subfigure[glass]{
    \begin{minipage}[b]{0.24\textwidth}
      \centering
      \includegraphics[width=1.7in]{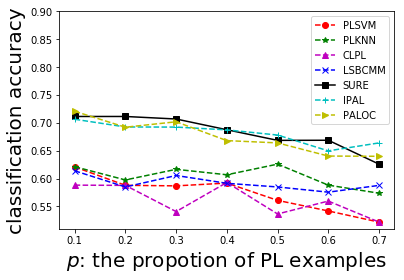}
    \end{minipage}}
      \subfigure[usps]{
    \begin{minipage}[b]{0.24\textwidth}
      \centering
      \includegraphics[width=1.7in]{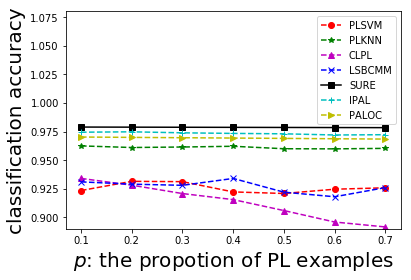}
    \end{minipage}}
  \caption*{Figure 4: Classification performance on controlled UCI datasets with $p$ ranging from 0.1 to 0.7 ($r=3$).}
  \label{fig4} 
\end{figure*}
As shown in Figures 1, 2, 3, and 4, SURE outperforms other comparing algorithms in general. To further statistically compare SURE with other algorithms, the detailed win/tie/loss counts between SURE and the comparing algorithms are recorded in Table 2. Out of the 112 results, it is easy to observe that:
\begin{itemize}
\item SURE achieves superior or at least comparable performance against PLKNN and PLSVM in all cases.
\item SURE achieves superior performance against CLPL and LSCMM in 72.3\% and 58.9\% cases while outperformed by them in only 4.5\% and 1.8\% cases, respectively.
\item SURE outperforms IPAL and PALOC in 50.9\% and 63.4\% cases while outperformed by them in only 5.4\% and 2.7\% cases, respectively.
\end{itemize}
In summary, the effectiveness of SURE on controlled UCI datasets is demonstrated.
\begin{table}[!t]
\centering\caption*{Table 2: Win/tie/loss ($t$-test at 0.05 significance level for two independent samples) counts on the controlled UCI datasets between SURE and the comparing algorithms.}
\setlength{\tabcolsep}{0.4mm}{
\begin{tabular}{c|c|c|c|c|c|c}
\hline
\hline
 & PLKNN & CLPL & IPAL & PLSVM & PALOC & LSBCMM \\
\hline
(\uppercase\expandafter{\romannumeral1}) & 24/4/0 & 21/6/1 & 14/11/3 & 21/7/0 & 20/8/0 & 14/14/0\\
\hline
(\uppercase\expandafter{\romannumeral2})  & 24/4/0 & 19/7/2 & 15/13/0 & 24/4/0 & 17/10/1 & 17/10/1\\
\hline
(\uppercase\expandafter{\romannumeral3})  & 24/4/0 & 21/6/1 & 14/13/1 & 25/3/0 & 17/10/1 & 19/9/0\\
\hline
(\uppercase\expandafter{\romannumeral4})  & 25/3/0 & 20/7/1 & 14/12/2 & 27/1/0 & 17/10/1 & 16/11/1\\
\hline
Total & \textbf{97/15/0} & \textbf{81/26/5} & \textbf{57/49/6} & \textbf{97/15/0} & \textbf{71/38/3} & \textbf{66/44/2}\\
\hline
\hline
\end{tabular}
}
\label{tab2}
\end{table}
\begin{table*}[!t]
\centering\caption*{Table 3: Characteristics of real-world partial label datasets.}
\label{tab3}
\setlength{\tabcolsep}{1.5mm}{
\begin{tabular}{c|c|c|c|c|c}
\hline
\hline
Dataset & Examples & Features & Labels & Avg. CLs & Task Domain\\
\hline
Lost & 1122  & 108 & 16 & 2.23 & \textsl{automatic face naming}~\cite{panis2014overview}\\
\hline
MSRCv2 & 1758 & 48 & 23 & 3.16 & \textsl{object classification}~\cite{liu2012conditional}\\
\hline
Soccer Player & 17472 & 279 & 171 & 2.09 & \textsl{automatic face naming}~\cite{zeng2013learning}\\
\hline
Yahoo! News & 22991 & 163 & 219 & 1.91 & \textsl{automatic face naming}~\cite{guillaumin2010multiple}\\
\hline
FG-NET & 1002 & 262 & 78 & 7.48 & \textsl{facial age estimation}~\cite{panis2014overview}\\
\hline
\hline
\end{tabular}
}
\end{table*}
\begin{table*}[!t]
\centering\caption*{Table 4: Classification accuracy of each algorithm on the real-world datasets. Furthermore, $\bullet$/$\circ$ indicates whether SURE is statistically superior/inferior to the comparing algorithm ($t$-test at 0.05 significance level for two independent samples).}
\label{tab4}
\setlength{\tabcolsep}{0.7mm}{
\begin{tabular}{c|c|c|c|c|c|c|c}
\hline
\hline
 & SURE & PLKNN & CLPL & IPAL & PLSVM & PALOC & LSBCMM\\
\hline
Lost & 0.781$\pm$0.039 & 0.432$\pm$0.051$\bullet$ & 0.742$\pm$0.038$\bullet$ & 0.678$\pm$0.053$\bullet$ & 0.729$\pm$0.042$\bullet$ & 0.629$\pm$0.056$\bullet$ & 0.693$\pm$0.035$\bullet$\\
\hline
MSRCv2 & 0.515$\pm$0.027 & 0.417$\pm$0.034$\bullet$ & 0.413$\pm$0.041$\bullet$ & 0.529$\pm$0.039 & 0.461$\pm$0.046$\bullet$ & 0.479$\pm$0.042$\bullet$ & 0.473$\pm$0.037$\bullet$\\
\hline
Soccer Player & 0.533$\pm$0.017 & 0.495$\pm$0.018$\bullet$ & 0.368$\pm$0.010$\bullet$ & 0.541$\pm$0.016 & 0.464$\pm$0.011$\bullet$ & 0.537$\pm$0.015 & 0.498$\pm$0.017$\bullet$\\
\hline
Yahoo! News & 0.644$\pm$0.015 & 0.483$\pm$0.011$\bullet$ & 0.462$\pm$0.009$\bullet$ & 0.609$\pm$0.011$\bullet$ & 0.629$\pm$0.010$\bullet$ & 0.625$\pm$0.005$\bullet$ & 0.645$\pm$0.005\\
\hline
FG-NET & 0.078$\pm$0.021 & 0.039$\pm$0.018$\bullet$ & 0.063$\pm$0.027 & 0.054$\pm$0.030$\bullet$ & 0.063$\pm$0.029 & 0.065$\pm$0.019 & 0.059$\pm$0.025$\bullet$\\
\hline
FG-NET(MAE3) & 0.458$\pm$0.024 & 0.269$\pm$0.045$\bullet$ & 0.458$\pm$0.022 & 0.362$\pm$0.034$\bullet$ & 0.356$\pm$0.022$\bullet$ & 0.435$\pm$0.018$\bullet$ & 0.382$\pm$0.029$\bullet$\\
\hline
FG-NET(MAE5) & 0.615$\pm$0.019 & 0.438$\pm$0.053$\bullet$ & 0.596$\pm$0.017$\bullet$ & 0.540$\pm$0.033$\bullet$ & 0.479$\pm$0.016$\bullet$ & 0.609$\pm$0.043 & 0.532$\pm$0.038$\bullet$\\
\hline
\hline
\end{tabular}
}
\end{table*}
\subsection{Real-World Datasets}
Table 3 reports the characteristics of real-world partial label datasets\footnote{These datasets are publicly available at: \url{http://cse.seu.edu.cn/PersonalPage/zhangml/}}
including Lost~\cite{cour2011learning}, MSRCv2~\cite{liu2012conditional}, Soccer Player~\cite{zeng2013learning}, Yahoo! News~\cite{guillaumin2010multiple}, and FG-NET~\cite{panis2014overview}. These real-world partial label datasets are from several task domains. For \textsl{automatic face naming} (Lost, Soccer Player, and Yahoo! News), each face (instance) is cropped from an image or a video frame, and the names appearing on the corresponding captions or subtitles are taken as candidate labels. For \textsl{facial age estimation} (FG-NET), human faces are regarded as instances while ages annotated by crowdsourcing labelers serve as candidate labels. For \textsl{object classification} (MSRCv2), each image segment is considered as an instance, and objects appearing in the same image are taken as candidate labels. The average number of candidate labels (Avg. CLs) per instance is also reported in Table 3.

Table 4 reports the mean classification accuracy as well as the standard deviation of each algorithm on each real-world dataset. Note that the average number of candidate labels (Avg. CLs) of FG-NET dataset is quite large, which results in an extremely low classification accuracy of each algorithm. For better evaluation of this facial age estimation task, we employ conventional mean absolute error (MAE)~\cite{zhang2016partial} to conduct two extra experiments. Specifically, for FG-NET (MAE3/MAE5), a test example is considered correctly classified if the MAE between the predicted age and the ground-truth age is no more than 3/5 years. As shown in Table 4, we can observe that:
\begin{itemize}
\item SURE significantly outperforms PLKNN on all the real-world datasets.
\item Out of the 42 cases (6 comparing algorithms and 7 datasets), SURE significantly outperforms all the comparing algorithms in 78.6\% cases, and achieves competitive performance in 21.4\% cases.
\item It is worth noting that SURE is never significantly outperformed by any comparing algorithms.
\end{itemize}
These experimental results on real-world datasets also demonstrate the effectiveness of SURE.
\subsection{Further Analysis}
\subsubsection{Parameter Sensitivity Analysis}
\begin{figure}[!t]
\centering
\addtocounter{subfigure}{-4}
\subfigure[Varying $\lambda$ on Lost]{\label{fig:subfig:a}
\includegraphics[width=0.45\linewidth]{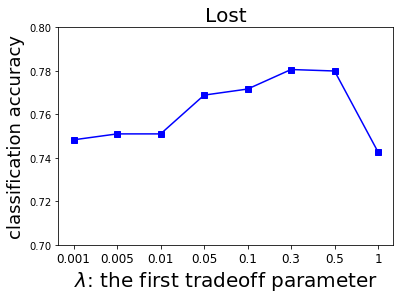}}
\hspace{0.01\linewidth}
\subfigure[Varying $\beta$ on Lost]{\label{fig:subfig:b}
\includegraphics[width=0.45\linewidth]{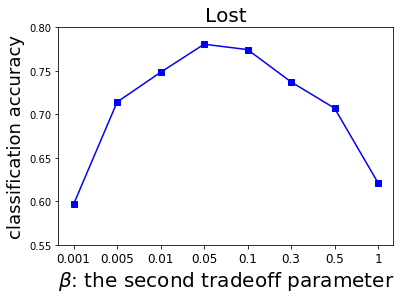}}
\vfill
\subfigure[Convergence on Lost]{\label{fig:subfig:c}
\includegraphics[width=0.45\linewidth]{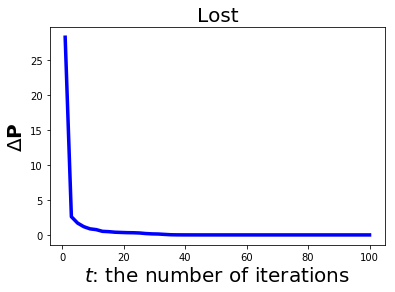}}
\hspace{0.01\linewidth}
\subfigure[Convergence on MSRCv2]{\label{fig:subfig:d}
\includegraphics[width=0.45\linewidth]{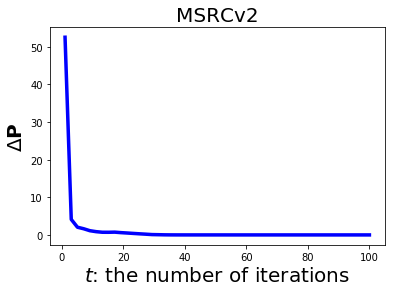}}
\caption*{Figure 5: Parameter sensitivity and convergence analysis for SURE. (a) Sensitivity analysis of $\lambda$ on Lost; (b) Sensitivity analysis of $\beta$ on MSRCv2; (c) Convergence analysis on Lost; (d) Convergence analysis on MSRCv2.}
\label{fig:subfig}
\end{figure}

There are two tradeoff parameters $\lambda$ and $\beta$ for SURE, which should be manually searched in advance. Hence this section studies how $\lambda$ and $\beta$ influence the prediction accuracy produced by SURE. We vary one parameter, while keeping the other fixed at the best setting. Figures 5(a) and 5(b) show the performance of SURE on the Lost dataset given different values of $\lambda$ and $\beta$ respectively. Note that $\lambda$ controls the importance of the maximum infinity norm regularization. When $\lambda$ is very small, the mutually exclusive relationships among labels are hardly considered, thus the classification accuracy would be at a low level. As $\lambda$ increases, we start to take into consideration such exclusive relationships, and the classification accuracy increases. However, if $\lambda$ is sufficiently large, the classification accuracy will drop dramatically. This is because when we overly concentrate on the mutually exclusive relationships among labels, we will directly regard the candidate label that has the maximal modeling output as the ground-truth label. Since to maximize the infinity norm $\left\|\mathbf{p}\right\|_{\infty}$ is overly important, the approximation loss will be totally ignored. From the above, we can draw a conclusion that it would be better to balance the approximation loss and the mutually exclusive relationships among labels. Such conclusion clearly comfirms the effectiveness of the SURE approach. Another tradeoff parameter $\beta$ aims to control the model complexity. The classification accuracy curve of varying $\beta$ obviously accords with our cognition that it is important to balance between overfitting and underfitting.
\subsubsection{Illustration of Convergence}
We illustrate the convergece of SURE by using the difference of the optimization variable $\mathbf{P}$ between two successive iterations ($\Delta\mathbf{P} = \left\|\mathbf{P}^{(t+1)}-\mathbf{P}^{(t)}\right\|_F$). Figure 5(c) and 5(d) show the convergence curves of SURE on Lost and MSRCv2 respectively. It is apparent that $\Delta\mathbf{P}$ gradually decreases to 0 as the number of iterations $t$ increases. Hence the convergence of SURE is demonstrated.
\section{Conclusion}
In this paper, we utilize the idea of self-training to exaggerate the mutually exclusive relationships among candidate labels for further enhancing partial label learning performance. Instead of manually performing pseudo-labeling after model training, we propose a unified formulation (named SURE) with the maximum infinity norm regularization to train the desired model and perform pseudo-labeling jointly. Extensive experimental results demonstrate the effectiveness of SURE.

Since self-training is a typical semi-supervised learning method, it would be interesting to extend SURE to the setting of semi-supervised learning. Besides, as mutually exclusive relationships exist in general multi-class problems, it would be valuable to explore other possible ways to incorporate such relationships into partial label learning.
\section{Acknowledgements}
This work was supported by MOE, NRF, and NTU.


\bibliographystyle{aaai}
\bibliography{aaai19}

\end{document}